\documentclass{article}

\PassOptionsToPackage{numbers,compress}{natbib}

\usepackage[preprint]{neurips_2026}
\ifdefined\reviewcopy
  \usepackage{lineno}
  \AtBeginDocument{\linenumbers}
\fi

\usepackage[utf8]{inputenc}
\usepackage[T1]{fontenc}
\usepackage{hyperref}
\hypersetup{hidelinks}
\usepackage{url}
\usepackage{booktabs}
\usepackage{amsfonts}
\usepackage{nicefrac}
\usepackage{microtype}
\usepackage{xcolor}
\usepackage{amsmath,amssymb,amsthm}
\usepackage{graphicx}
\usepackage{array}
\usepackage{multirow}
\usepackage{tabularx}
\usepackage{placeins}
\usepackage{comment}

\graphicspath{{figures/}}
\DeclareGraphicsExtensions{.pdf,.png,.jpg,.jpeg}

\newcommand{\paperfigure}[2][]{%
\includegraphics[#1]{#2}%
}

\newtheorem{proposition}{Proposition}
\newtheorem{lemma}{Lemma}

\title{Engram Injection: A Study of Temporal Validity in Parametric Language-Model Memory}

\author{%
Sam Siavoshian
\And
Omar Ramadan
\And 
Amin M.\ E-A.\ Diab
\AND
Amir K. Saeed
\And
Benjamin A. Johnson
\And
Benjamin M. Rodriguez
}

\begin{document}
\raggedbottom

\maketitle

\begin{abstract}
A stored fact may stop being valid when a user changes a preference or an event replaces an earlier one. We study whether a language model can learn these changes in its parameters and answer correctly from only a question and the current time. Engram Injection stores facts in LoRA adapters and supplies the clock through FiLM conditioning on a frozen Qwen2.5-3B-Instruct model. We compare it with additive clock conditioning, prompting with the full dated history, and retrieving four dated records. We train six matched adapters across three seeds after fixing a sampler that could admit test-age examples into training. The training audit reports no such examples among 127,760 evaluated training inputs. On 62 pairs of questions asked before and after an update, at an elapsed age of seven days excluded from training, both adapter methods answer zero pairs correctly under the primary blinded assistant review with order code 1 held fixed. Full-history prompting answers 59/62 pairs correctly, and retrieval answers 62/62. Separate experiments show that adapters can store facts and that removable memory units can support local deletion. Combining fact selection with time-based version selection remains unsuccessful. These results separate learning a fact from learning when to use it. 
\end{abstract}

\section{Introduction}
\label{sec:intro}

A user first says they prefer Boston, then changes their preference to Seattle. An agent asked about the preference should return Boston for a time before the change and Seattle for a time after it. The agent must remember both the values and when each value applies.

One solution is to keep dated records outside the model and retrieve the records that apply at the query time. We use full-history prompting and retrieval as baselines for this approach. Our question is whether the model can learn the same information in its weights, so it can answer without receiving the dated records again.

We introduce \emph{Engram Injection} to study this question. Each stored fact $f_i$ has a write time $t_i$. At inference, the model receives a question $x$ and a numerical encoding $\chi(\tau_q)$ of the query time. It receives neither the fact text nor the write time as text. The target behavior is
\begin{equation}
  \mathrm{answer}(x,\tau_q)=A\!\left(f_{i(x)},\tau_q-t_{i(x)}\right).
  \label{eq:target}
\end{equation}
Here, $i(x)$ identifies the relevant memory, and $A$ returns the answer for the elapsed time $\tau_q-t_{i(x)}$. The model needs to retain information about the write time and use it with the supplied clock.

We test three parts of this problem. A shared LoRA adapter stores facts and attempts to choose the value valid at the query time. A bank of separate memory units selects facts by the meaning of the question and supports deletion by removing a unit. A joint bank attempts to select both the relevant attribute and its valid historical version. Table~\ref{tab:capabilities} shows which parts work in each configuration.

The clock channel follows ChronoState, which supplies numerical time features to a frozen model through FiLM~\cite{siavoshian2026chronostate}. FiLM lets these features scale and shift the model's hidden activations. Products of clock and write-time features can represent elapsed time. We test whether this property helps, first with a small classification task and then by comparing FiLM with additive conditioning in the language model.

In the corrected experiment, six adapters are trained across three seeds. We test 62 pairs of before-update and after-update questions at an elapsed age of seven days excluded from training. Both adapter methods score 0/62 under the primary review, compared with 59/62 for full-history prompting and 62/62 for retrieval. Earlier experiments show fact storage, changes in answer preference, and deletion in separate systems. The joint bank does not reliably combine fact selection with version selection.

\begin{table}[t]
\centering
\caption{Capabilities tested in each system. ``Demonstrated'' refers to a successful component test. ``Diagnostic only'' refers to answer preference at trained times. ``Failed'' marks an evaluated capability that did not succeed. ``--'' marks a capability not tested by that system.}
\label{tab:capabilities}
\scriptsize
\begin{tabularx}{\linewidth}{@{}Xccc@{}}
\toprule
Capability &
Shared Parametric &
Addressable Bank &
Joint Bank
\\
\midrule
Prompt-free parametric recall &
Demonstrated &
Demonstrated &
--
\\
Clock-gated supersession &
Diagnostic only &
--
&
Failed
\\
Semantic addressing &
--
&
Demonstrated &
Partial
\\
Local structural deletion &
--
&
Demonstrated &
Demonstrated
\\
Unified semantic + temporal routing &
--
&
--
&
Failed
\\
\bottomrule
\end{tabularx}
\end{table}

\paragraph{Contributions.}
\begin{itemize}
\item A method for storing dated facts in parameters and supplying the current time through a clock channel.
\item Matched comparisons of FiLM and additive conditioning with full-history and retrieval baselines, including tests that freeze or shuffle the clock.
\item Experiments on storage, editing, deletion, and joint routing that show where each component succeeds or fails.
\end{itemize}

\section{Related Work}
\label{sec:related}

\paragraph{Storing facts in parameters.}
Associative memory, fast weights, LoRA, test-time memory, and model editing provide ways to store information in parameters or other learned state~\cite{anderson1972,kohonen1972,schmidhuber1992fastweights,ba2016fastweights,hu2021lora,meng2022rome,meng2022memit,titans2025}. Cartridges learn reusable representations of context~\cite{cartridges2025}. MEGa stores events in gated adapter-like memories and selects them by semantic similarity~\cite{mega2025}. These methods help answer two questions: what should be stored, and which memory is relevant? Engram Injection also asks which stored value applies at the current time.

ROME and MEMIT modify factual associations in model parameters~\cite{meng2022rome,meng2022memit}. Our CounterFact experiment adds a dated replacement and tests whether the model can prefer the original value before the update and the replacement afterward. We do not rerun ROME or MEMIT under our protocol, so this experiment is not a direct comparison with those methods.

\paragraph{Representing time.}
Periodic and learned features can encode time, and models can also learn temporal information in their weights~\cite{kazemi2019time2vec,nylund2023timevectors}. ChronoState supplies the current time directly to a frozen model~\cite{siavoshian2026chronostate}. We use that clock channel and ask whether the model can combine it with information learned about when a fact was written.

\paragraph{Keeping records outside the model.}
Retrieval systems store dated records externally, where their dates can be inspected and filtered. Zep represents temporal relationships in external memory. LongMemEval and LoCoMo test memory over interaction histories~\cite{zep2025,longmemeval2024,locomo2024}. Our full-history and retrieval baselines use dated records. The parametric systems instead learn the records during training and receive only the question and numerical conditioning inputs at inference.

\FloatBarrier

\section{Method}
\label{sec:method}

\subsection{The frozen model and clock channel}
\label{sec:clock}

Figure~\ref{fig:architecture} shows the system with one shared memory adapter. We use a frozen Qwen2.5-3B-Instruct model~\cite{qwen2025} in BF16. The clock converts the current time $\tau$ into 31 numbers:
\begin{equation}
\chi(\tau)=
\bigl[
\sin(\tau/T_1),\cos(\tau/T_1),\ldots,
\sin(\tau/T_{15}),\cos(\tau/T_{15}),
\log(1+\tau)
\bigr],
\label{eq:chi}
\end{equation}
The 15 sinusoidal scales range from 2 seconds to 604800 seconds; the last feature is $\log(1+\tau)$. Together, they represent time at several scales. At every decoder layer except the last, linear projections convert these features into FiLM scale and shift vectors~\cite{peebles2023dit}.
\begin{equation}
\gamma_l=W_l^\gamma\chi+b_l^\gamma,\qquad
\beta_l=W_l^\beta\chi+b_l^\beta,
\end{equation}
\begin{equation}
h'_l
=
h_l+
\alpha_l
\left(
\gamma_l\odot h_l+\beta_l
\right),
\label{eq:film}
\end{equation}
The symbol $\odot$ denotes elementwise multiplication. The gate $\alpha_l$ starts at zero, so the added clock path initially leaves the backbone's activations unchanged. We train the clock projections, the gates, and rank-8 LoRA modules~\cite{hu2021lora} on the attention projections and language-model head. The backbone weights stay frozen.

\begin{figure}[tbp]
\centering
\includegraphics[width=\linewidth,trim=12 28 12 8,clip]{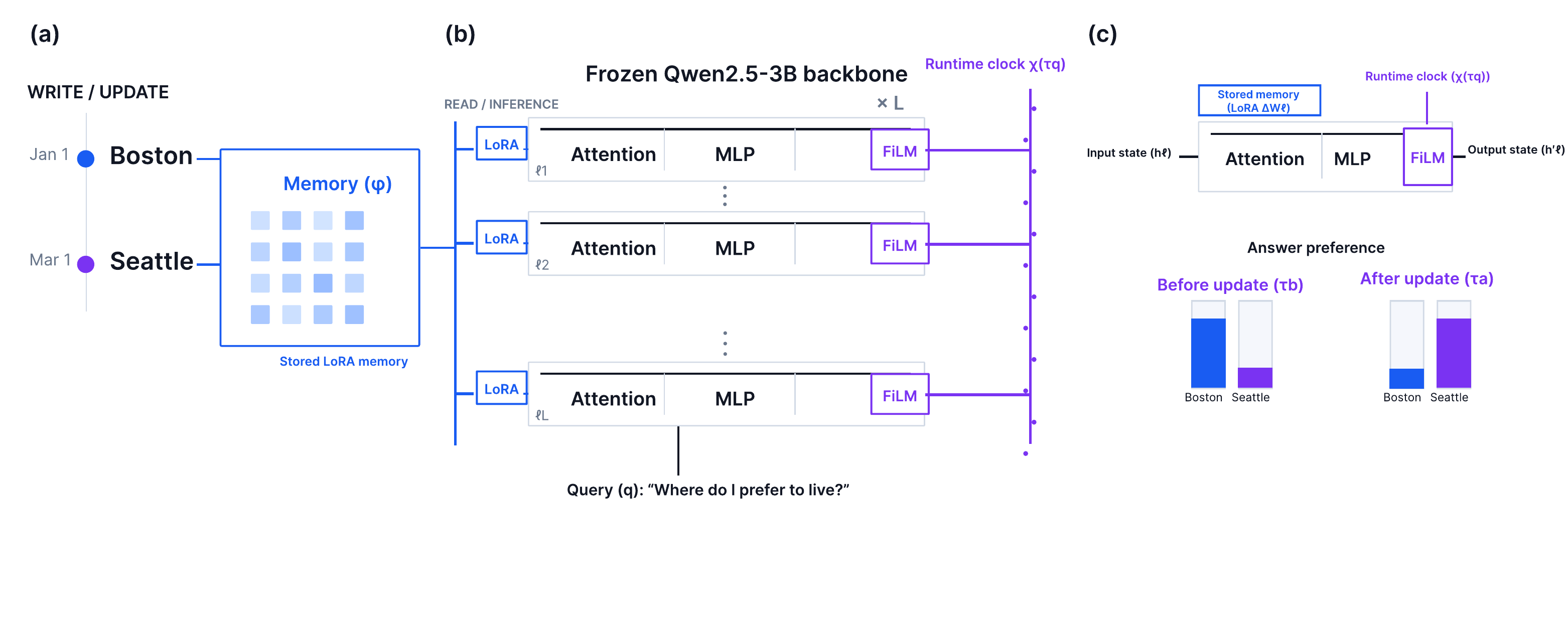}
\caption{\textbf{Engram Injection architecture.} (a) Dated updates are written into LoRA memory. (b) A frozen backbone receives the question and FiLM clock features. Clock conditioning applies to every decoder layer except the last. (c) Schematic old/new preference reversal; these bars are not measured accuracies.}
\label{fig:architecture}
\end{figure}

\subsection{Why products can represent elapsed time}
\label{sec:bilinear_main}

Selecting a valid fact often requires the elapsed time since it was written. To see how the clock features can represent this difference, consider one sinusoidal scale:
\[
\chi_k(\tau)
=
\left(
\sin(\tau/T_k),
\cos(\tau/T_k)
\right).
\]

\begin{proposition}[Relative age is bilinear]
\label{prop:bilinear}
For each scale $k$, there exists a fixed bilinear map $B_k$ such that
\[
B_k\!\left(
\chi_k(\tau_q),
\chi_k(t_i)
\right)
=
\chi_k(\tau_q-t_i).
\]
\end{proposition}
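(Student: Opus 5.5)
The plan is to apply the angle-subtraction identities and read off the required bilinear map directly. Write $a=\tau_q/T_k$ and $b=t_i/T_k$, so that $\chi_k(\tau_q)=(\sin a,\cos a)=:(u_1,u_2)$ and $\chi_k(t_i)=(\sin b,\cos b)=:(v_1,v_2)$. Because the scale $T_k$ is shared by both arguments, $(\tau_q-t_i)/T_k=a-b$. The target is therefore $\chi_k(\tau_q-t_i)=(\sin(a-b),\cos(a-b))$.

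The key step is the pair of identities
\[
\sin(a-b)=\sin a\cos b-\cos a\sin b,\qquad \cos(a-b)=\cos a\cos b+\sin a\sin b .
\]
In the coordinates above, these read $u_1v_2-u_2v_1$ and $u_2v_2+u_1v_1$. Each output coordinate is a fixed linear combination of products $u_jv_l$, so both are bilinear in $(u,v)$. I would define $B_k:\mathbb{R}^2\times\mathbb{R}^2\to\mathbb{R}^2$ by
\[
B_k(u,v)=\bigl(u^\top M_1 v,\;u^\top M_2 v\bigr),\qquad
M_1=\begin{pmatrix}0&1\\-1&0\end{pmatrix},\quad
M_2=\begin{pmatrix}1&0\\0&1\end{pmatrix}.
\]
Then I would check that $B_k$ is linear in each argument separately. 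Its coefficient matrices are constant, so they do not depend on $\tau_q$, $t_i$, or even $k$. Substituting the sinusoidal features and using the identities gives exactly $\chi_k(\tau_q-t_i)$.

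There is no real obstacle here. The only point that needs care is the ordering convention: $\chi_k$ lists $\sin$ before $\cos$, so the signs in $M_1$ must be placed consistently with that order. It is also worth remarking that the same $B_k$ works for every scale. Stacking these maps across scales therefore gives a single fixed bilinear map on the 30 sinusoidal coordinates of $\chi$ that produces the sinusoidal part of $\chi(\tau_q-t_i)$. The $\log(1+\tau)$ feature is excluded, since the proposition concerns only the sinusoidal scales.
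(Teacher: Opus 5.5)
Your proof is correct and follows the paper's argument: substitute $a=\tau_q/T_k$, $b=t_i/T_k$ and read the bilinear map off the angle-subtraction identities. Your explicit matrices $M_1$ and $M_2$ make concrete the map the paper leaves implicit, and they give exactly $\sin(a-b)$ and $\cos(a-b)$ in the paper's $(\sin,\cos)$ order.
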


The result follows from the standard angle-difference identities:
\begin{equation}
\sin(a-b)
=
\sin a\cos b-\cos a\sin b,
\qquad
\cos(a-b)
=
\cos a\cos b+\sin a\sin b.
\label{eq:bilinear_identity}
\end{equation}

If a hidden state represents the phase of a fact's write time, FiLM can multiply it by features of the query time. The product $\gamma(\chi(\tau_q))\odot h$ can then produce features of their difference. Adding $W\chi(\tau_q)$ to $h$ does not directly produce these products in that layer.

We test this idea with a small classification task. Each example combines one of 50 fixed random fact vectors with a fact time and a query time. We sample 20,000 examples, with elapsed ages uniformly distributed from 60 seconds to ten days plus 60 seconds. The readout predicts one of five age categories, separated at one hour, one day, three days, and seven days. \emph{Age-band accuracy} is the fraction assigned to the correct category. The categories contain approximately 0.4\%, 9.6\%, 20\%, 40\%, and 30\% of examples. Always predicting the largest category scores about 40\%; uniform guessing scores 20\%.

A linear readout scores 0.928 with FiLM and 0.396 with additive conditioning. With one hidden layer, the scores rise to 0.987 and 0.982. FiLM therefore helps the linear readout, while a nonlinear additive readout recovers almost the same information. This task measures age classification, not language-model answer accuracy. Section~\ref{sec:comparison} compares both paths in the full model. The joint router also receives clock--boundary products, so its failure cannot be explained simply by the absence of multiplication.

A separate ridge-regression probe predicts relative phase with $R^2=0.003$ from raw phase features and $R^2=1.000000$ from four explicit bilinear products. The products make relative phase accessible to a linear readout. The model still has to learn useful write-time information in its hidden states. Appendix~\ref{app:bilinear} gives the proof and explains the limits imposed by periodic features.

\subsection{Shared Parametric Memory}
\label{sec:e1}

Shared Parametric Memory stores all facts in one LoRA adapter. For each training example, we sample a fact $i$, a query time $\tau_q$, and a question $x$ without a timestamp in its text. The target answer $y$ is the value valid at that time. We minimize negative log-likelihood on the answer tokens:
\begin{equation}
\min_{\phi}
\;
\mathbb{E}_{i,\tau_q,(x,y)}
\left[
-\log
p_{\theta,\phi}
\bigl(
y\mid x;\chi(\tau_q)
\bigr)
\right],
\label{eq:e1obj}
\end{equation}
Here, $\theta$ denotes the frozen backbone weights and $\phi$ the trainable memory weights.

The write time $t_i$ determines the training answer but is not supplied as a separate input vector or included in the inference prompt. Any write-time information used at inference must therefore come from the learned parameters. The earlier rank-8 configuration has approximately 9.58M trainable parameters. The adapters in the matched comparison contain 40.91 MiB of trainable tensors; this is a different measurement from the earlier serialized sizes.

\subsection{Selecting attributes and historical versions}
\label{sec:joint_router}

The addressable bank stores facts in separate, removable units. Each unit has a semantic key $c_a$ derived from a representation of its question. A query representation $e(q)$ is compared with these keys by cosine similarity. A softmax turns the similarities into selection weights:
\begin{equation}
s_a(q)
=
\operatorname{cos}
\left(
e(q),c_a
\right),
\qquad
p_{\rm sem}(a\mid q)
=
\operatorname{softmax}_a
\left(
s_a/T_s
\right).
\label{eq:semantic_gate}
\end{equation}
Deleting a memory removes that unit's contribution without changing the parameters of neighboring units.

The joint bank gives each attribute a separate unit for each historical version, indexed by $(a,v)$. A temporal head predicts $p_{\rm time}(v\mid a,\tau)$ from the clock, the version's validity boundaries, and their multiplicative interactions. We multiply the attribute-selection weight by the version-selection weight:
\begin{equation}
g_{a,v}(q,\tau)
=
p_{\rm sem}(a\mid q)
\,
p_{\rm time}(v\mid a,\tau).
\label{eq:joint_gate}
\end{equation}
The resulting weight selects both what the question is about and which version applies at the query time.

\subsection{Teaching the model to use updated facts}
\label{sec:curriculum}

We train time-based selection using changed facts. If a preference changes from blue to green, the same question should favor blue before the update and green afterward. We compare the two answers through their log-probability difference:
\begin{equation}
m(\tau)
=
\log p(\mathrm{new}\mid q,\chi(\tau))
-
\log p(\mathrm{old}\mid q,\chi(\tau)).
\label{eq:margin}
\end{equation}
A positive $m(\tau)$ favors the new value; a negative value favors the old one. We report the fraction of queries with the expected sign on each side of the update:
\begin{align}
\mathrm{new\_after}
&=
\Pr
\left[
m(\tau_{\rm after})>0
\right],
\\
\mathrm{old\_before}
&=
\Pr
\left[
m(\tau_{\rm before})<0
\right].
\label{eq:behavioral}
\end{align}

Some configurations also receive an ordinal code $z$, a numerical tag associated with write order during training. Their answer distribution is $p(y\mid q;\chi(\tau),z)$. We omit $z$ from the equations to keep them readable. To test the clock alone, we must hold this code fixed. We compare ordinary answer-token NLL training, an order tag, a temporal-margin loss, and their combination. With $[z]_+=\max(0,z)$, the margin loss is

\begin{equation}
    \mathcal{L}_{\rm temp}
    =
    [\mu+m(\tau_{\rm before})]_+
    +
    [\mu-m(\tau_{\rm after})]_+,
\end{equation}
\begin{equation}
    \mathcal{L}
    =
    \mathcal{L}_{\rm NLL}
    +
    \lambda
    \mathcal{L}_{\rm temp}.
    \label{eq:temporal_margin}
\end{equation}

\begin{lemma}[Margin loss and preference reversal]
    \label{lem:margin_zero}
    Let
    \[
    m(\tau)
    =
    \log p(\mathrm{new}\mid q,\chi(\tau))
    -
    \log p(\mathrm{old}\mid q,\chi(\tau)).
    \]
    If \(\mathcal{L}_{\rm temp}=0\) for margin threshold \(\mu>0\), then
    \[
    m(\tau_{\rm before})\le -\mu < 0
    \qquad\text{and}\qquad
    m(\tau_{\rm after})\ge \mu > 0.
    \]
    Consequently the model strictly prefers the old value before the update boundary and the new value afterward.
\end{lemma}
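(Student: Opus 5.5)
The plan is to use only two facts: the hinge function $[z]_+=\max(0,z)$ is nonnegative, and it vanishes exactly when its argument is nonpositive. The loss $\mathcal{L}_{\rm temp}$ is the sum of two hinge terms. A sum of nonnegative reals is zero only if every summand is zero. So from $\mathcal{L}_{\rm temp}=0$ we first conclude
\[
[\mu+m(\tau_{\rm before})]_+=0
\qquad\text{and}\qquad
[\mu-m(\tau_{\rm after})]_+=0 .
\]

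Next, we unpack each hinge. The identity $[z]_+=0$ is equivalent to $z\le 0$, because $\max(0,z)=0$ forces $z\le 0$. The first term then gives $\mu+m(\tau_{\rm before})\le 0$, that is, $m(\tau_{\rm before})\le-\mu$. The second term gives $\mu-m(\tau_{\rm after})\le 0$, that is, $m(\tau_{\rm after})\ge\mu$. The hypothesis $\mu>0$ supplies the strict inequalities $-\mu<0<\mu$, which yields the displayed chain.

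Finally, we translate the sign of the margin into a preference statement. By the definition in Eq.~\eqref{eq:margin}, $m(\tau)<0$ means $\log p(\mathrm{new}\mid q,\chi(\tau))<\log p(\mathrm{old}\mid q,\chi(\tau))$. Since $\log$ is strictly increasing, this is equivalent to $p(\mathrm{old}\mid\cdot)>p(\mathrm{new}\mid\cdot)$. Symmetrically, $m(\tau)>0$ means $p(\mathrm{new}\mid\cdot)>p(\mathrm{old}\mid\cdot)$.

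There is no real obstacle here; the argument is elementary. The only point needing care is implicit in the definition of $m$: both probabilities must be strictly positive so that $m$ is finite. This holds automatically for a softmax-parameterized language model. We would also note that when the model receives an ordinal code $z$, it is held fixed and suppressed in the notation, as stated above. The conclusion is then about $p(\cdot\mid q;\chi(\tau),z)$ at that fixed $z$.
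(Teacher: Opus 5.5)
Your proof is correct and follows the same route as the paper: the two hinge terms are nonnegative, so each must vanish, which makes each argument nonpositive, and rearranging with $\mu>0$ gives the strict sign conditions. Your extra step, which converts the sign of $m$ into $p(\mathrm{old}\mid\cdot)>p(\mathrm{new}\mid\cdot)$ and the reverse via the strict monotonicity of $\log$, states directly what the paper expresses through the $\mathrm{old\_before}$ and $\mathrm{new\_after}$ metrics.
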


The loss asks the model to favor each answer by at least $\mu$. The preference metrics only ask whether the correct answer has higher probability. A model can therefore pass those metrics without reaching zero margin loss. Appendix~\ref{app:margin} gives the proof.

We freeze the clock at one value or shuffle its values across queries while keeping the ordinal code fixed. This tests what changes when the clock changes. Earlier tests changed both the clock and the code; we report those separately.

\subsection{Evaluation}
\label{sec:data}

PersonaFacts-100 contains 100 dated synthetic facts about identity, preferences, relations, and events over 180 days. The earlier main update experiment uses 25 old/new pairs in a 140-fact configuration. A separate experiment uses 100 update pairs. We compute the correct answer from the fact database at each query time. Parametric systems receive no timestamp text. External-memory systems receive dated records and an explicit query day, allowing them to select records valid on that day.

CounterFact~\cite{meng2022rome,meng2022memit} tests whether a dated edit can replace a fact already known by the base model. A forget/retain split from TOFU~\cite{maini2024tofu} tests deletion. We do not rerun ROME or MEMIT with the same backbone and protocol, so their published scores are not matched baselines here.

The joint-bank benchmark asks the system to select an attribute and its historical version. Four attributes have multiple versions. The test includes new question paraphrases without timestamps, query times inside excluded intervals, and queries near update boundaries. We measure the accuracy of the selected attribute, selected version, and their combination; exact answer matches; correctness when the clock changes; and changes to retained memories after deletion.

We seed Python, NumPy, and PyTorch in each multi-seed experiment and save the model, optimizer, and random-number-generator states together. Appendix~\ref{app:details} describes the sampling, controls, and additional evaluations.

\section{Results}
\label{sec:results}

\subsection{Earlier component experiments}
\label{sec:components}

\subsubsection{Fact storage and answer preference}
\label{sec:legacy_clock_results}

The earlier experiments test whether the system can recall facts and prefer the appropriate answer at times used during training. A preference score compares answer likelihoods; it does not require the model to generate a correct answer. Section~\ref{sec:trained_code} reports that stricter test. Figure~\ref{fig:original_temporal} summarizes the earlier preference experiments.

The shared adapter recalls 92 of 100 PersonaFacts-100 items without receiving the fact text in the query prompt. It can therefore store and recover facts in this controlled setting.

Supplying the clock alone does not reliably make the model prefer the new value after an update. On 25 update pairs, it gives a mean $\mathrm{new\_after}$ of $0.38$ across four seeds. An order tag without the margin loss gives $0.30$. Margin-only training gives $0.82$ across two seeds. Combining the order tag and margin loss gives $0.79$ across ten seeds, ranging from $0.72$ to $0.92$. The combined system also gives $\mathrm{old\_before}=0.88$ and recall $0.87$.

On 100 update pairs, the combined system gives $\mathrm{new\_after}=0.948$, $\mathrm{old\_before}=0.99$, and recall $0.945$ across five seeds. The clock-only baseline gives $0.675$, $0.38$, and $0.63$ across two seeds. The model still changes its answer preference at this larger tested size.

\begin{table}[htbp]
\centering
\caption{
Temporal supersession across training arms and update-set sizes.
The range is shown for the earlier ten-seed configuration.
}
\label{tab:temporal_all}
\small
\begin{tabular}{@{}llcccc@{}}
\toprule
Pairs &
Training arm &
Seeds &
$\mathrm{new\_after}$ &
$\mathrm{old\_before}$ &
Recall
\\
\midrule
25 &
Time channel only &
4 &
0.38 &
0.65 &
0.77
\\
25 &
Order tag only &
2 &
0.30 &
0.76 &
0.77
\\
25 &
Margin only &
2 &
\textbf{0.82} &
0.72 &
0.81
\\
25 &
Order + margin &
\textbf{10} &
0.79 [0.72,0.92] &
\textbf{0.88} &
\textbf{0.87}
\\
\midrule
100 &
Time channel only &
2 &
0.675 &
0.38 &
0.63
\\
100 &
Order + margin &
5 &
\textbf{0.948} &
\textbf{0.99} &
\textbf{0.945}
\\
\bottomrule
\end{tabular}
\end{table}

The margin loss produces the largest improvement in these tests. The margin-only experiment uses just two seeds, so the extra benefit of the order tag is uncertain.

Shuffling the timestamps lowers $\mathrm{new\_after}$ from $0.79$ to $0.28$; freezing the clock lowers it to $0.44$. This earlier procedure can also change the ordinal code. The later controls hold the code fixed to separate its effect from the clock's effect.

\begin{figure}[tbp]
\centering
\includegraphics[width=\linewidth,trim=12 28 12 8,clip]{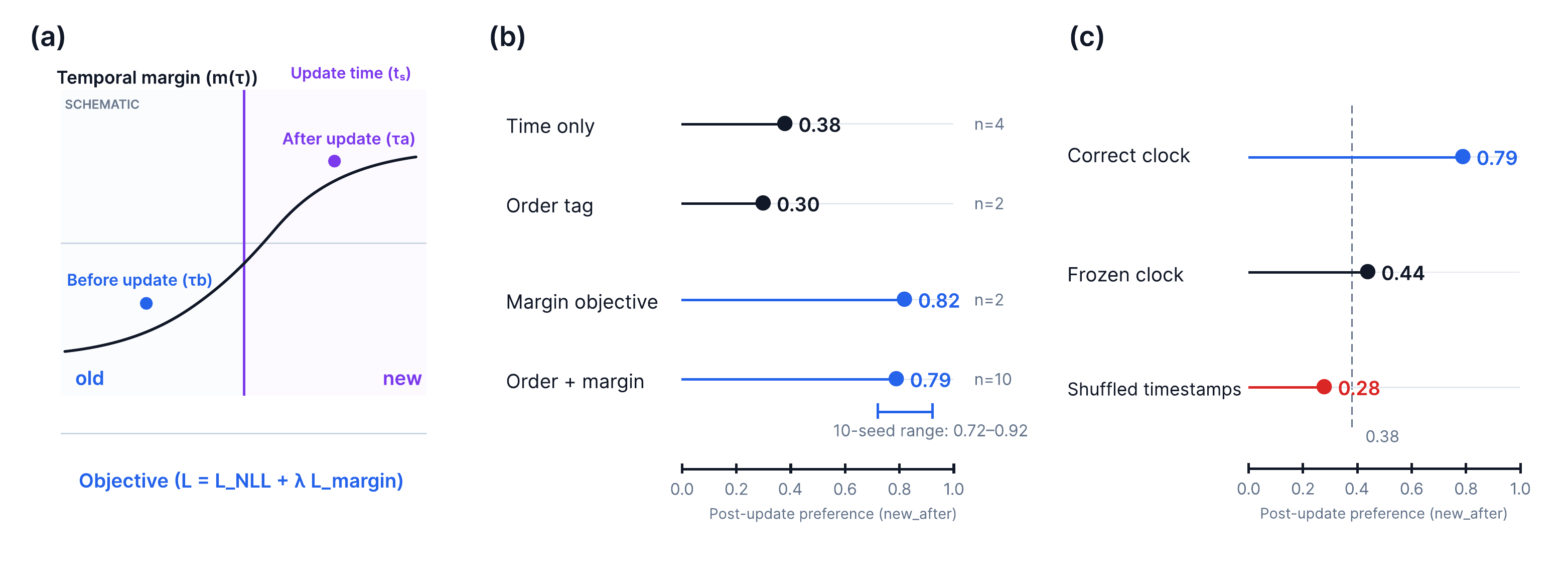}
\caption{\textbf{Earlier answer-preference tests.} (a) The target margin favors the old value before an update and the new value afterward. (b) Training-arm scores measure post-update likelihood preference; $n$ is the number of seeds. (c) Frozen and shuffled controls lower this score. These earlier controls can also change the order code. Section~\ref{sec:trained_code} tests generated answers with the code fixed.}
\label{fig:original_temporal}
\end{figure}

\subsubsection{Editing and deletion}
\label{sec:external}

Table~\ref{tab:external} reports the editing and deletion experiments. In CounterFact, a dated update replaces information already known by the base model. Edit efficacy ranges from $0.96$ to $1.00$, falling to $0.00$--$0.12$ when the clock is frozen. Before the update, the old-value preference remains $0.99$. The model can thus retain a preference for the historical answer as well as the replacement. Fair-locality, which measures preservation of held-out facts, is $0.80$, compared with $0.87$ for the base model.

The TOFU split tests deletion in the separate bank of removable memories. Recall on the forget and retain sets is $0.01$ and $0.00$ before injection, rising to $0.98$ and $0.97$ afterward. Removing the forget-set units returns their recall to $0.01$, while retain-set recall stays at $0.97$. Deletion preserves the tested retained memories.

\begin{table}[htbp]
\centering
\caption{
External editing and deletion evaluation.
CounterFact evaluates the clock-conditioned editing configuration; TOFU evaluates structural deletion in the separate addressable bank.
}
\label{tab:external}
\small
\begin{tabular}{@{}llccc@{}}
\toprule
Benchmark &
Metric &
Base &
Injected &
Intervention
\\
\midrule
CounterFact &
Efficacy &
0.14 &
\textbf{0.96--1.00} &
0.00--0.12
\\
CounterFact &
Fair-locality &
0.87 &
0.80 &
0.81
\\
CounterFact &
Pre-update old preference &
0.86 &
\textbf{0.99} &
0.99
\\
\midrule
TOFU &
Forget-set recall &
0.01 &
0.98 &
\textbf{0.01}
\\
TOFU &
Retain-set recall &
0.00 &
0.97 &
\textbf{0.97}
\\
\bottomrule
\end{tabular}
\end{table}

\subsubsection{Answer preference across model sizes}
\label{sec:model_scale_main}

We also test selected configurations with 3B, 7B, and 14B Qwen2.5-Instruct backbones. These are separate runs from the ten-seed experiment in Table~\ref{tab:temporal_all}. Post-update preference scores are $0.813$, $0.760$, and $0.747$, and recall is $0.89$, $0.85$, and $0.86$, respectively. CounterFact efficacy is $1.00$ at all three sizes.

\begin{table}[htbp]
\centering
\caption{Selected configurations at three Qwen2.5-Instruct sizes. These runs are separate from the ten-seed experiment in Table~\ref{tab:temporal_all}.}
\label{tab:model_scale_main}
\small
\begin{tabular}{@{}lccc@{}}
\toprule
Size &
Controlled flip &
Recall &
CounterFact efficacy
\\
\midrule
3B &
0.813 &
0.89 &
1.00
\\
7B &
0.760 &
0.85 &
1.00
\\
14B &
0.747 &
0.86 &
1.00
\\
\bottomrule
\end{tabular}
\end{table}

All three model sizes show preference reversal, but larger models do not consistently improve the score (Table~\ref{tab:model_scale_main}).

\FloatBarrier
\subsection{Full-history, retrieval, and additive comparisons}
\label{sec:comparison}

We compare six systems using seeds 0, 1, and 2, with the same 140 facts and 25 update pairs per seed. P0 puts the full dated history in the prompt. P1 uses MiniLM embeddings to retrieve four records and then sorts them by date. Their as-of variants, P0-asof and P1-asof, first discard records written after the query time. These baselines receive the query day as text. The FiLM and additive systems receive a question, numerical clock features, and a fixed order code; the fact records are omitted.

Each additive run matches a FiLM run in data, optimizer, LoRA rank, order code, margin loss, and number of training steps. The additive path uses $h'=h+\alpha\beta(\chi(\tau))$, removing the multiplication by $h$ from Equation~\ref{eq:film}. Its projection bias starts at 0.01. If both the projection and the gate started at zero, neither could receive the gradients needed to open the path.

\paragraph{Scoring and conditioning.}
Literal matching checks whether the expected value appears in the generated answer. Each update produces two questions: one before the update and one after it. A pair counts as correct only when both answers match. This is stricter than testing which answer has higher likelihood (Equation~\ref{eq:behavioral}). The earlier comparison uses code 0, which is not trained by the current-value curriculum. Tables~\ref{tab:comparison} and~\ref{tab:heldout} retain those results. The corrected test uses trained code 1 as its primary setting and code 2 as a second check. Each code stays fixed across query times and clock controls.

\paragraph{Training endpoints.}
Table~\ref{tab:comparison} evaluates the same before/after times used in margin training. These scores describe behavior at trained times. Recall uses 240 questions across the three seeds; each temporal column uses 75 update pairs. We pool counts before converting them to percentages. FiLM scores higher than additive conditioning at these times, but both score below the as-of baselines.
\begin{table}[t]
\centering\small
\caption{Three-seed literal-match comparison at training endpoints. Before, after, and both use 75 pairs; recall uses 240 questions. Mean prompt tokens are descriptive query costs.}
\label{tab:comparison}
\begin{tabular}{@{}lrrrrr@{}}\toprule
System & Recall (\%) & Before (\%) & After (\%) & Both (\%) & Tokens\\\midrule
P0 & 61.67 & 30.67 & 86.67 & 25.33 & 3087.3\\
P0-asof & 62.08 & 89.33 & 92.00 & 89.33 & 1717.9\\
P1 & 61.25 & 10.67 & 89.33 & 9.33 & 148.0\\
P1-asof & 62.50 & 88.00 & 92.00 & 88.00 & 145.8\\
Additive & 97.92 & 77.33 & 36.00 & 25.33 & 6.0\\
FiLM & 97.50 & 81.33 & 41.33 & 33.33 & 6.0\\
\bottomrule\end{tabular}
\end{table}

\paragraph{Earlier code-0 temporal-offset tests.}
The earlier test was intended to hold out ages from 5 to 9 days, where age is the query time minus the relevant fact's write time. A pair qualifies only if both values remain valid through that age band and neither known margin-training time falls inside it. This leaves 55 pairs across three seeds. It tests new ages of learned facts. However, a sampler fallback could draw a time inside the excluded band, so training/test separation is unverified. The corrected test below is our primary evaluation. Because the seeds have different eligible counts, pooled accuracy can differ from the average seed accuracy.

Full-history prompting with future records removed answers 47/55 pairs correctly, and retrieval answers 48/55. FiLM answers 7/55 and additive conditioning 6/55. Both as-of baselines perform better under this earlier fixed-zero-code procedure.

\begin{table}[t]
\centering\small
\caption{Earlier code-0 temporal-offset tests with unverified training/test separation. Literal scores are pooled counts out of 55 eligible pairs. Judge scores are supplementary, unvalidated semantic assessments of the same generated answers.}
\label{tab:heldout}
\begin{tabular}{@{}lrrrr@{}}\toprule
System & Before & After & Both (\%) & Judge both\\\midrule
P0 & 33/55 & 48/55 & 54.5 & 35/55\\
P0-asof & 49/55 & 49/55 & 85.5 & 53/55\\
P1 & 11/55 & 49/55 & 18.2 & 10/55\\
P1-asof & 48/55 & 51/55 & 87.3 & 55/55\\
Additive & 35/55 & 10/55 & 10.9 & 1/55\\
FiLM & 43/55 & 14/55 & 12.7 & 2/55\\
\bottomrule\end{tabular}
\end{table}

\paragraph{Scoring uncertainty.}
Literal matching may miss a valid paraphrase or accept an answer that includes the correct value and then contradicts it. We also score these earlier answers with a local Qwen2.5-7B-Instruct judge that receives no system label. It gives 53/55 for P0-asof, 55/55 for P1-asof, 1/55 for additive, and 2/55 for FiLM. These judgments have not been checked by human reviewers. Both scoring methods favor the as-of baselines, although they disagree substantially on the adapter answers.

\subsection{Corrected tests at an excluded query age}
\label{sec:trained_code}

We fix the sampler and train both adapter methods again from scratch for 1200 optimizer steps per seed. FiLM and additive runs use the same facts, training schedule, and budget for seeds 0, 1, and 2. The corrected sampler excludes the entire closed interval from 5 to 9 days after a referenced fact was written. If no permitted time is available, it skips the draw and records why.

The completion audit reports that the sampler was checked before freezing the test and starting training. It then checks all 127,760 inputs actually evaluated for NLL and margin training against both the sampling policy and an independent time calculation. It reports zero inputs in the excluded age band and zero exact overlaps with frozen test timestamps. These checks took place before evaluation. Appendix~\ref{app:clean_rerun} describes which audit checks we could verify from the supplied reports.

The test contains 18, 20, and 24 eligible pairs for the three seeds, totaling 62. In each question, the relevant fact is seven days old. The model learns the fact values during training; seven-day query ages are excluded. Code 1 is the primary setting and code 2 checks sensitivity to the code choice. We keep each code fixed before and after the update and under live, frozen, and shuffled clocks. During training, these codes remain associated with write order.

We rerun P0-asof and P1-asof on the same test questions and correct answers. Both remove future writes and receive the query day as text. P0 includes every remaining dated write. P1 uses all-MiniLM-L6-v2 to retrieve the four most similar remaining writes by cosine similarity, then sorts them by date. All systems decode greedily, generate at most 24 tokens, and stop at EOS or a newline. Baselines use a chat template; adapters use the question-plus-clock/code interface used in their training.

\paragraph{Complete-answer review.}
We use blinded assistant review to judge the whole answer rather than just search for a matching string. It labels 891 unique question/gold-answer/prediction triples, covering 1736 answer occurrences. A pair succeeds only if both answers are correct. The strict score treats ambiguous answers as incorrect; an upper-bound score accepts them. The unique triples receive 167 correct, 680 incorrect, and 44 ambiguous labels. The run reports that mappings and decision hashes pass its checks. This is assistant review, with no automated judge or independent human validation.

\begin{table}[t]
\centering\small
\caption{Corrected held-out 7-day temporal-offset tests on stored facts. All entries are pooled counts out of 62 pairs. Strict blinded assistant review rejects ambiguity; upper bounds accept it. Both baselines are rerun on the same queries.}
\label{tab:trained_code}
\begin{tabular}{@{}lllrrr@{}}\toprule
System & Code & Clock & Literal & Reviewed & Upper bound\\\midrule
P0-asof & -- & asof & 52/62 & 59/62 & 59/62\\
P1-asof & -- & asof & 54/62 & 62/62 & 62/62\\
Additive & 1 & frozen & 12/62 & 0/62 & 1/62\\
Additive & 1 & live & 11/62 & 0/62 & 1/62\\
Additive & 1 & shuffled & 11/62 & 0/62 & 2/62\\
Additive & 2 & frozen & 6/62 & 0/62 & 3/62\\
Additive & 2 & live & 9/62 & 0/62 & 2/62\\
Additive & 2 & shuffled & 3/62 & 0/62 & 3/62\\
FiLM & 1 & frozen & 9/62 & 0/62 & 0/62\\
FiLM & 1 & live & 11/62 & 0/62 & 0/62\\
FiLM & 1 & shuffled & 9/62 & 0/62 & 0/62\\
FiLM & 2 & frozen & 6/62 & 0/62 & 0/62\\
FiLM & 2 & live & 10/62 & 1/62 & 1/62\\
FiLM & 2 & shuffled & 9/62 & 0/62 & 0/62\\
\bottomrule\end{tabular}
\end{table}

Both adapter methods answer 0/62 pairs correctly with code 1, under every clock setting (Table~\ref{tab:trained_code}; Figure~\ref{fig:temporal_results}). P0-asof answers 59/62 correctly (95.16\%), and P1-asof answers 62/62 (100\%). Descriptive Wilson 95\% intervals are 0--5.83\% for each adapter, 86.71--98.34\% for P0-asof, and 94.17--100\% for P1-asof. Table~\ref{tab:clean_seeds} lists the primary counts for each seed.

\begin{table}[tbp]
\centering\small
\caption{Primary code-1 live-clock reviewed counts by seed. Both temporal answers must be correct.}
\label{tab:clean_seeds}
\begin{tabular}{@{}lrrrr@{}}\toprule
System & Seed 0 & Seed 1 & Seed 2 & Pooled\\\midrule
FiLM & 0/18 & 0/20 & 0/24 & 0/62\\
Additive & 0/18 & 0/20 & 0/24 & 0/62\\
P0-asof & 18/18 & 20/20 & 21/24 & 59/62\\
P1-asof & 18/18 & 20/20 & 24/24 & 62/62\\
\bottomrule\end{tabular}
\end{table}

No primary pair changes from failure to success when the live clock is replaced by a frozen or shuffled clock. Each exact paired McNemar test gives $p=1$. Because the adapters fail every pair in all three settings, these scores cannot show whether their internal computations use the clock.

With code 2, FiLM answers one pair correctly under the live clock and none under either control. Additive conditioning answers none in all settings. FiLM's live/control difference is 1.61 percentage points, with exact two-sided $p=1$ and a within-seed pair-bootstrap interval of 0--4.84 points. One successful pair is insufficient to establish a clock benefit. Accepting all ambiguous answers still gives at most 3/62 for any adapter setting.

\paragraph{Literal-score disagreement.}
Under code 1 and the live clock, literal matching gives 11/62 for both adapters, 52/62 for P0-asof, and 54/62 for P1-asof. Whole-answer review can reject a response that contains the correct value but denies knowing it, lists conflicting versions, or refers to another entity. It can also accept clear paraphrases that literal matching misses. The run reports 644 disagreements across answer occurrences and records the complete answers and review reasons.

\begin{figure}[tbp]
\centering\paperfigure[width=\linewidth]{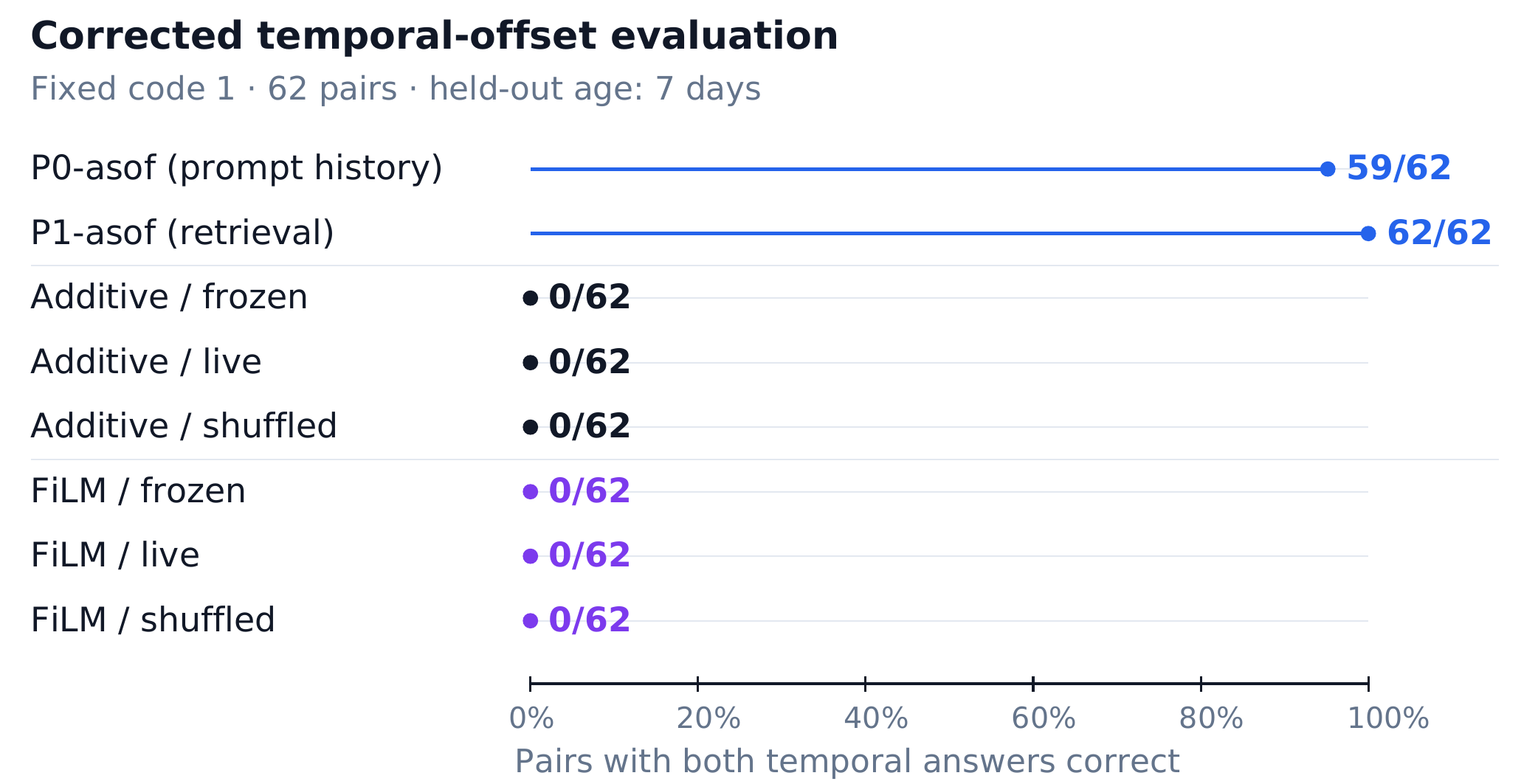}
\caption{Primary reviewed scores on 62 before/after pairs at a seven-day age excluded from training. Code 1 stays fixed across query times and clock controls. Both as-of baselines are rerun on these queries. With all adapter scores at zero, the controls cannot determine internal clock use.}
\label{fig:temporal_results}
\end{figure}

Fixing the sampler and using trained codes does not make the adapters answer these time-dependent questions reliably. Section~\ref{sec:limitations} describes the test's scope and remaining uncertainties.

\subsection{Earlier endpoint clock interventions}
\label{sec:clock_results}

In the earlier tests, changing the clock while keeping code 0 fixed changes the mean likelihood margin by 1.014 nats for FiLM and 2.858 nats for additive conditioning. This shows that the output responds to the clock, but does not show that the answer is valid at the query time. Table~\ref{tab:fixed_controls} compares generated answers at training times with code 1 fixed, under live and frozen clocks. These are separate from the excluded-age test above. Other earlier tests changed the clock and code together; their scores cannot isolate the clock's effect. Section~\ref{sec:components} reports those component experiments separately.

\begin{table}[t]
\centering\small
\caption{Literal pairs correct at both training endpoints with ordinal code one fixed. Each seed contains 25 pairs. Frozen controls hold the runtime clock constant.}
\label{tab:fixed_controls}
\begin{tabular}{@{}lrrrr@{}}\toprule
 & \multicolumn{2}{c}{FiLM (\%)} & \multicolumn{2}{c}{Additive (\%)}\\
Seed & Live & Frozen & Live & Frozen\\\midrule
0 & 64 & 24 & 32 & 4\\
1 & 56 & 12 & 56 & 16\\
2 & 0 & 0 & 52 & 8\\\bottomrule
\end{tabular}
\end{table}

\subsection{Joint attribute and version selection}
\label{sec:joint_result}

The earlier joint-bank configuration selects the correct attribute with accuracy 0.94, the correct version with accuracy 0.42, and their combination with accuracy 0.35. Exact answer accuracy is also 0.35. Correctness when the clock shifts is zero. These results come from a different configuration than the replication below.

We repeat the joint-bank experiment across three seeds, with 96 queries per seed. It is separate from the shared-adapter comparison. Mean attribute-selection accuracy is 0.7292, version-selection accuracy 0.4167, combined accuracy 0.2188, and exact answer accuracy 0.3681 (Table~\ref{tab:joint_failure_main}). Clock-shift success remains zero. Figure~\ref{fig:factorization} shows how the components are intended to combine.

\begin{table}[t]
\centering\small
\caption{Three-seed joint-bank replication; means across seeds. These are separate metrics from shared-adapter literal-span matching.}
\label{tab:joint_failure_main}
\begin{tabular}{@{}lr@{}}\toprule
Metric & Score\\\midrule
Semantic top-1 & 0.7292\\
Temporal top-1 & 0.4167\\
Joint attribute/version top-1 & 0.2188\\
Answer exact match & 0.3681\\
Clock-shift success & 0.0000\\
Deleted-version/attribute recall & 0.0000\\\bottomrule
\end{tabular}
\end{table}

\paragraph{Interpreting the failure.}
FiLM and the router both receive features that can represent elapsed time. They still have to learn which validity interval contains the query time. Earlier attempts added training steps, replaced the temporal head with a deeper network, and added normalized raw time. None resolved the full-task failure (Appendix~\ref{app:joint_diagnostics}).

With shuffled clocks, combined routing accuracy is 0.3333, 0.2708, and 0.2292 across the three seeds. All exceed the mean live-clock score of 0.2188. A constant clock gives 0.3021. The supplied clock does not improve routing reliably in this replication. Direct distances to interval boundaries and separate temporal heads for each attribute remain possible next experiments.

Deleted items are no longer recalled, but accuracy on retained versions changes by as much as 0.0444 in absolute terms. Removing a unit therefore also affects some retained-version answers.

\begin{figure}[tbp]
\centering
\includegraphics[width=\linewidth]{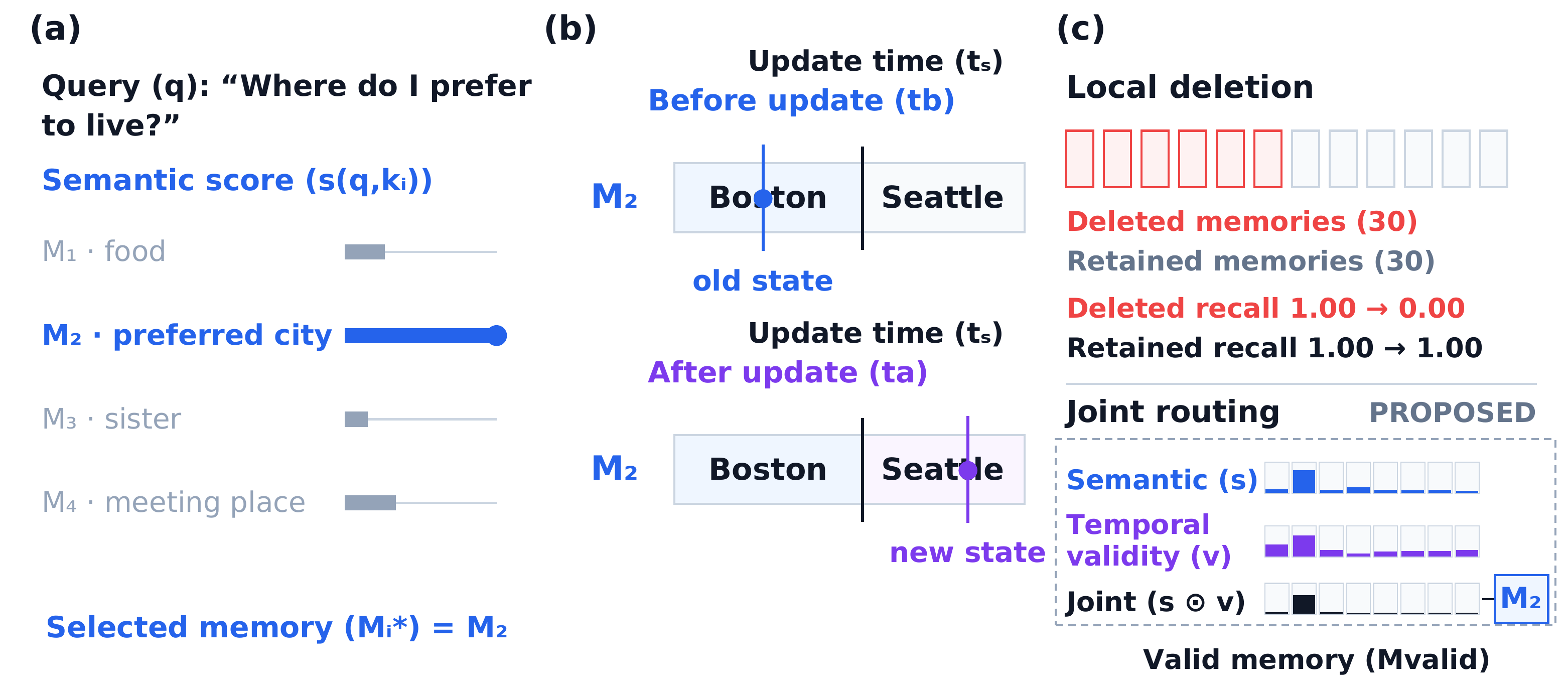}
\caption{\textbf{Attribute selection, version selection, and deletion.} (a) The question selects an attribute. (b) The clock should select its valid version. (c) An earlier bank with 30 forget and 30 retain memories illustrates deletion, separately from TOFU and the joint-bank replication. The dashed panel shows the intended combination. Table~\ref{tab:joint_failure_main} reports the joint system's measured performance.}
\label{fig:factorization}
\end{figure}

\subsection{Prompt size and computation costs}
\label{sec:efficiency}

The corrected test uses mean prompts of 1734.39 tokens for P0-asof, 143.94 for P1-asof, and 5.40 for each adapter. Their ranges are 258--3085, 134--152, and 5--6 tokens. The adapters use fewer prompt tokens, but these counts omit training and retrieval computation. Each adapter contains 40.91 MiB of trainable tensors. Table~\ref{tab:comparison} gives the earlier training-time-query prompt counts.

Earlier measurements give generation times of about 0.326 seconds for full-history prompting, 0.154 for retrieval, and 0.912 for the adapter. The runs use different decoding APIs and precomputed retrieval embeddings, so these times are not a matched speed comparison. Full adapter query times also include likelihood probes. Additive training takes a mean of 1689.7 seconds; a comparable FiLM training time is not established. Appendix~\ref{app:efficiency} reports separate earlier capacity and cost tests.

\section{Limitations}
\label{sec:limitations}

The corrected test uses synthetic facts, repeated question templates, three seeds, and one excluded query age of seven days. The model has already learned the tested facts. The test asks whether it can use them at a new elapsed age; it does not test unseen facts or generalization across a broad range of times. Earlier training-time and 55-pair results use different procedures, and separation from training is unverified for the earlier 55-pair test.

Answers are reviewed by a blinded assistant without independent human validation. The completion report describes checks on all evaluated training inputs, but the underlying source snapshots and logs were not supplied. We could verify report hashes and pooled counts, not repeat the full training audit. The uncertainty estimates describe these runs, and the clock and code comparisons are exploratory.

The ordinal codes are associated with write order during training. Keeping a code fixed across query times changes that association. The 24-token generation limit and the different input formats used by baselines and adapters may also affect their answers. When all pair scores are zero, clock controls cannot reveal whether the model uses time internally. The test does not identify a single cause for the failures.

Storage, deletion, and time-based selection are tested in separate configurations. Removing a memory unit prevents its direct contribution but does not prove that all related information is unrecoverable. Fewer prompt tokens also do not establish a lower total cost. That comparison must include training, updates, memory storage, and retrieval computation.

\section{Conclusion}
\label{sec:conclusion}

Engram Injection stores dated facts in learned parameters and supplies the current time through a separate clock channel. The component experiments show that facts can be stored and that removable units can support local deletion. Choosing the valid version remains unreliable. On 62 before/after pairs at a seven-day age excluded from training, both adapter methods score 0/62 under the primary review, compared with 59/62 for full-history prompting and 62/62 for retrieval. The joint bank also fails to select attributes and historical versions reliably. Learning a fact, selecting the relevant memory, and choosing when it applies are separate problems; the tested systems do not yet solve all three together.


\appendix
\section{Representing elapsed time with products}
\label{app:bilinear}

\begin{proof}[Proof of Proposition~\ref{prop:bilinear}]
Let
\[
a=\tau_q/T_k,
\qquad
b=t_i/T_k.
\]
Then
\[
\sin(a-b)
=
\sin a\cos b-\cos a\sin b,
\]
and
\[
\cos(a-b)
=
\cos a\cos b+\sin a\sin b.
\]
Both coordinates of $\chi_k(\tau_q-t_i)$ are therefore bilinear in the query-time and write-time phase encodings.
\end{proof}

The proof shows how products of two time encodings can represent their difference. Applying it to the model requires the hidden state to retain features of the fact's write time. The proposition does not establish that the shared adapter learns those features.

A single sinusoidal pair repeats, so different times can have the same phase. We combine several scales to reduce this ambiguity. Over the 180-day benchmark window, tested times separated by at least one day have a minimum full-encoding distance of 1.04. The largest single sinusoidal scale is unambiguous over only about 22 days. Longer-range ordering therefore requires more than one phase.

\section{Temporal-margin loss}
\label{app:margin}

\begin{proof}[Proof of Lemma~\ref{lem:margin_zero}]

The positive-part operator returns zero exactly when its input is nonpositive. The sum of its two nonnegative terms is therefore zero exactly when both inputs are nonpositive:
\begin{align*}
    \mu + m(\tau_{\rm before}) &\le 0,\\
    \mu - m(\tau_{\rm after}) &\le 0.
\end{align*}

Rearranging the two inequalities gives
\[
    m(\tau_{\rm before})\le -\mu < 0
    \qquad\text{and}\qquad
    m(\tau_{\rm after})\ge \mu > 0.
\]

These inequalities imply the signs required by the two preference metrics:
\[
    \mathrm{old\_before}
    =
    \Pr\bigl[m(\tau_{\rm before})<0\bigr],
    \qquad
    \mathrm{new\_after}
    =
    \Pr\bigl[m(\tau_{\rm after})>0\bigr].
\]

Zero margin loss therefore gives the required preference at both times for the evaluated triple \((q,\tau_{\rm before},\tau_{\rm after})\).
\end{proof}

The reverse implication does not hold. The metrics only require the correct sign of $m$, while zero loss also requires its magnitude to reach $\mu$.
\section{Additional experimental details}
\label{app:details}

\subsection{Datasets and temporal sampling}

PersonaFacts-100 contains 100 dated synthetic facts about identity, preferences, relations, and events over 180 days. The earlier main update test uses 25 old/new pairs within a 140-fact configuration. A separate test expands this to 100 update pairs.

ChronoRecall-125 has five question families with 25 questions each: absolute time (T-ABS), relative age (T-REL), event order (T-ORD), conflicting recent values (T-CONF), and age or staleness (T-AGE). We use PersonaFacts-1K for capacity tests. Parametric questions omit timestamp text. The external-memory baselines receive dated records and an explicit query day.

Shared-adapter training samples query ages on a log-spaced grid from one minute to 90 days. Earlier runs intended to exclude ages from 5 to 9 days, but a midpoint fallback could choose a time inside that band. The corrected sampler removes this fallback and applies the exclusion to every input evaluated for NLL or margin training (Section~\ref{sec:trained_code}). Only answer tokens contribute to the NLL loss. The inference prompt contains no fact text.

For an updated fact, we ask the same question before and after the change. The metrics $\mathrm{old\_before}$ and $\mathrm{new\_after}$ check whether the old and new answers, respectively, have higher likelihood at those times.

The earlier combined order-plus-margin experiment uses ten seeds on 25 pairs. Section~\ref{sec:components} reports it separately from the main three-seed comparison. At the 100-pair size, we use five seeds for the combined configuration and two for the clock-only baseline.

\subsection{Controls}

Table~\ref{tab:controls} lists the earlier controls. Tables~\ref{tab:comparison} and~\ref{tab:heldout} report P0, P1, and their as-of variants; Table~\ref{tab:trained_code} reports the corrected comparison.

\begin{table}[h]
\centering
\caption{
Temporal controls and ablations.
}
\label{tab:controls}
\small
\begin{tabular}{@{}p{0.18\linewidth}p{0.31\linewidth}p{0.43\linewidth}@{}}
\toprule
Arm &
Intervention &
Interpretation
\\
\midrule
P0 &
Facts as JSON in prompt &
Recurring long-context prefill cost
\\
P1 &
Retrieve dated chunks &
Text-based temporal reasoning baseline
\\
A1 &
Zero $\chi$ channel &
Removes runtime clock contribution
\\
Order only &
Add order tag, no temporal margin &
Tests whether order representation alone drives supersession
\\
Margin only &
Add temporal margin, no order tag &
Tests whether validity supervision drives supersession
\\
Shuffled time &
Permute timestamps &
Breaks fact--time alignment;
$\mathrm{new\_after}=0.28$
\\
Constant time &
Freeze runtime clock &
Removes query-time variation;
$\mathrm{new\_after}=0.44$
\\
\bottomrule
\end{tabular}
\end{table}

\subsection{ChronoRecall evaluation families}

Table~\ref{tab:families} describes what each ChronoRecall question family tests.

\begin{table}[h]
\centering
\caption{
ChronoRecall-125 evaluation families.
Evaluation questions omit timestamp text.
}
\label{tab:families}
\small
\begin{tabular}{@{}l p{0.34\linewidth} p{0.42\linewidth}@{}}
\toprule
Family &
Question type &
Intended diagnostic
\\
\midrule
T-ABS &
Absolute anchoring &
Calendar/elapsed categories from hidden clock state
\\
T-REL &
Relative age &
Query time relative to stored write time
\\
T-ORD &
Ordering &
Earlier/later relation between stored events
\\
T-CONF &
Recency conflict &
Old/new resolution across supersession
\\
T-AGE &
Age/staleness &
Coarse age and stale/current judgments
\\
\bottomrule
\end{tabular}
\end{table}

\subsection{External evaluation protocols}

In CounterFact, a dated update replaces a fact known by the base model. We measure success on the edited fact, preservation of held-out facts, preference for the original answer before the update, and edit success with the clock frozen. ROME and MEMIT are not rerun with the same backbone and implementation, so their published results are not matched comparisons.

The TOFU-derived test separates facts into forget and retain sets~\cite{maini2024tofu}. We measure recall before injection, after injection, and after removing the forget-set units. The operation deletes the addressed units exactly. The experiment tests whether forget-set answers return to their earlier behavior while retain-set answers are preserved; it does not establish that all traces of the forgotten information are removed.

\subsection{Reproducibility}

Each multi-seed run seeds Python, NumPy, and PyTorch, uses deterministic cuDNN, and saves adapter, optimizer, and random-number-generator states together. In a forced-interruption test, resuming produces zero per-step loss difference and the same final-weight hash.

\subsection{Corrected rerun and audit scope}
\label{app:clean_rerun}
The corrected experiment trains FiLM and additive adapters for each of seeds 0, 1, and 2, giving six fresh runs of 1200 optimizer steps. The fixed test contains 18, 20, and 24 eligible pairs, respectively. Every relevant fact is seven days old at its test query. Training draws with no allowed time are skipped and their reasons recorded. The completion report describes source inspection, the order of sampling and evaluation events, arithmetic checks on actual training inputs, comparisons with planned inputs, artifact hashes, and evaluation-row checks. It reports that all 127,760 training inputs obey the age exclusion and avoid exact test timestamps. The model learns the fact values during training but receives no test annotations or evaluation feedback.

We received the summary, conclusions, and completion report. The summary and conclusion hashes match the completion report, and pooled counts equal the sums of seed counts. The source snapshots, fixed test queries, training logs, and individual review decisions cited in that report were not supplied. We therefore report its exhaustive training-audit findings as reported checks, rather than claiming to have repeated them independently.

Retrieval uses all-MiniLM-L6-v2 with attention-mask mean pooling, normalized embeddings, and a 128-token embedding limit. We remove writes later than the query date, select the four most similar remaining writes by cosine similarity, and sort them by write time. All systems generate greedily with a 24-token limit and stop at EOS or a newline. The baselines use chat templates; the adapters use raw questions with numerical clock and code inputs.

Blinded assistant review labels 891 unique triples covering 1736 answer occurrences. It assigns 167 correct, 680 incorrect, and 44 ambiguous labels. Ambiguity appears in 71 occurrences. Strict scores reject ambiguous answers; upper-bound scores accept them. Literal matching and review disagree on 644 occurrences. Independent human validation is still absent. Paired clock tests and pair-bootstrap intervals within seeds are exploratory, apply only to the three tested seeds, and are not adjusted for multiple comparisons.

\section{Additional training and clock tests}
\label{app:optimization}

\subsection{Temporal-margin sensitivity}

We test two additional temporal-margin weights while keeping the other settings unchanged (Table~\ref{tab:lambda_sensitivity}). Both $\lambda=0.1$ and $\lambda=0.5$ give $\mathrm{new\_after}=0.84$, with high pre-update preference and recall. Each setting uses one seed, so these tests describe the runs rather than establish insensitivity to $\lambda$. The earlier ten-seed combined experiment uses a fixed selected coefficient.

\begin{table}[h]
\centering
\caption{
One-seed sensitivity checks for the temporal-margin coefficient.
}
\label{tab:lambda_sensitivity}
\small
\begin{tabular}{@{}lccc@{}}
\toprule
Setting &
$\mathrm{new\_after}$ &
$\mathrm{old\_before}$ &
Recall
\\
\midrule
$\lambda=0.1$ &
0.84 &
1.00 &
0.90
\\
$\lambda=0.5$ &
0.84 &
0.96 &
0.91
\\
\bottomrule
\end{tabular}
\end{table}

\subsection{Limits of periodic clock features}

Each sinusoidal pair repeats. A single scale $T_k$ distinguishes signed time differences only within the range
\[
|\Delta|<\pi T_k,
\]
while the $\log(1+\tau)$ feature continues to increase with time, but its sensitivity decreases at larger values.

Taking the dot product of two sinusoidal time encodings gives a similarity that depends only on their time difference:
\[
K(\Delta)
=
\sum_k
\cos(\Delta/T_k).
\]

This similarity peaks at 15 and has a half-height width of approximately 66 seconds. It also has secondary peaks near 1.8, 7.1, and 14.3 hours. Raw phase similarity therefore identifies closely aligned times but does not provide a straightforward way to choose a memory over days.

The attempted time-keyed memory bank learns a nonnegative weight $m_k$ for each scale:
\begin{equation}
K_M(\Delta)
=
\sum_k
m_k
\cos(\Delta/T_k),
\qquad
m_k\ge0.
\end{equation}
These experiments do not establish successful time-based routing with the learned weights.

\subsection{The clock gate during training}

During shared-adapter training, the loss falls and the mean clock gate $\alpha$ rises from zero to approximately 0.13. This shows that the clock path opens during optimization. Gate size alone does not show that the model chooses valid answers; that requires the answer and clock-control tests.

\subsection{Learning rate and LoRA rank}
\label{app:sweeps}

Table~\ref{tab:sweeps} gives earlier tests that vary one training setting at a time. In the tested range, post-update preference changes more with learning rate than with LoRA rank. The default learning rate is $3\times10^{-4}$ and the default rank is 8.

\begin{table}[h]
\centering
\caption{One-factor temporal sweeps.}
\label{tab:sweeps}
\small
\begin{tabular}{@{}llc@{}}
\toprule
Lever &
Value &
$\mathrm{new\_after}$
\\
\midrule
Learning rate &
$1\times10^{-4}$ &
0.68
\\
Learning rate &
$3\times10^{-4}$ &
0.79
\\
Learning rate &
$1\times10^{-3}$ &
\textbf{0.88}
\\
LoRA rank &
4 &
0.76
\\
LoRA rank &
8 &
0.79
\\
LoRA rank &
16 &
0.76
\\
\bottomrule
\end{tabular}
\end{table}

\section{Additional tests of joint routing}
\label{app:joint_diagnostics}

The joint-bank test uses a fixed set of attributes, with multiple historical versions for four of them. Queries include new paraphrases without timestamps, times inside excluded intervals, and times near update boundaries.

We first train the values using the correct routes supplied by an oracle. We then calibrate attribute selection and version selection separately before training them together. The final selection weight is
\[
g_{a,v}(q,\tau)
=
p_{\rm sem}(a\mid q)
p_{\rm time}(v\mid a,\tau).
\]

Small deterministic tests check the following properties:

\begin{itemize}
    \item changing only query semantics changes the selected semantic attribute;
    \item changing only $\tau$ changes the selected temporal version in the isolated construction;
    \item joint route mass sums to one;
    \item dense-mode deletion changes output by exactly the deleted unit's contribution up to numerical precision.
\end{itemize}

All four checks pass. The numerical residual for dense-mode deletion is approximately $5\times10^{-8}$. These checks verify the constructed components, separately from whether the trained router generalizes to the full test.

We try 800 temporal-head training steps, an MLP in place of the linear head, normalized raw-time features, and evaluation immediately after temporal calibration. Full-test version accuracy stays at $0.42$. The raw-time variant nearly solves a one-attribute test but fails across the complete set of attribute-specific intervals.

Shuffling time does not reduce joint performance relative to the live clock. The learned head therefore shows no reliable benefit from the supplied clock on the tested excluded intervals.

Possible next tests are to give relative time a more direct input path, use a separate temporal head for each attribute, or supply distances to the two validity boundaries:
\[
(\tau-t_0)
\qquad\text{and}\qquad
(t_1-\tau).
\]

\section{Additional cost, capacity, and safety tests}
\label{app:efficiency}

\paragraph{Recurring inference cost.}
Table~\ref{tab:efficiency_appendix} gives earlier measurements taken after memory was written, separately from Section~\ref{sec:efficiency}. They exclude training, update latency, and retrieval indexing. They also do not account for how many queries are needed to recover the cost of writing memory.

\begin{table}[h]
\centering
\caption{
Measured query-time cost at 140 stored facts.
}
\label{tab:efficiency_appendix}
\small
\begin{tabular}{@{}lcc@{}}
\toprule
Memory representation &
Prompt tokens/query &
Prefill time (s)
\\
\midrule
Facts in prompt &
2442 &
0.362
\\
Shared Parametric Memory &
\textbf{6} &
\textbf{0.042}
\\
\bottomrule
\end{tabular}
\end{table}

\paragraph{Memory footprint.}
The earlier rank-8 adapter occupies approximately 32 MB in memory and 36.5 MB when serialized. Its parameter count stays fixed as more facts are trained into it, but the number of facts it can recall does not necessarily stay high.

In an earlier context-length test, prompt-side inference uses 8.4 GB of peak memory at 8k tokens and 16.3 GB at 32k tokens. Engram Injection uses approximately 5.86 GB over the tested memory range. At 32k tokens, prompt-side inference uses $2.8\times$ as much peak VRAM.

\paragraph{Fixed-budget capacity sweep.}
With a fixed training budget, shared-adapter recall is 1.00 through $N=20$ facts, 0.88 at $N=60$, and 0.39 at $N=150$. The change in NLL on held-out general text ranges from $+0.06$ to $+0.18$ and does not increase consistently with $N$.

Memory recall declines before a consistent decline in general-text performance appears. The test does not distinguish limited training from interference or intrinsic capacity. The separate addressable bank keeps recall at 1.000 for $N=60$ and $N=150$, using approximately 220 KiB per fact.

\paragraph{Comparison with retrieval.}
Retrieval keeps the original dated text and supplies selected records at each query. Parametric memory learns the information during training. The main comparison tests both approaches on the same facts across three seeds, including baselines that filter records by query date. Its results are separate from these earlier capacity and cost experiments.

\paragraph{Exploratory safety diagnostics.}
In a small test of never-provided attributes, the shared adapter invents answers for 7 of 8 attributes. The frozen base invents answers for 0 of 4. Adding abstention examples to 20\% of training reduces adapter fabrication to 0 of 8 and raises recall on non-conflicting facts from 0.93 to 0.96.

In five prompt-injection tests, Engram Injection leaks in 3/5 cases and the base model in 2/5. These small tests are not repeated across seeds and cannot establish general safety performance.

\section*{Acknowledgments}

The authors used OpenAI GPT models to assist with manuscript writing and editing, and with writing code. The authors are responsible for the research ideas, mathematical claims, and final content of this paper.

\end{document}